\newcommand{\cmark}{\textcolor{green!60!black}{\ding{51}}}
\newcommand{\xmark}{\textcolor{red!70!black}{\ding{55}}}
\newtheorem{dfn}{Definition}
\newtheorem{lmm}{Lemma}
\newenvironment{proof}{\begin{pf}}{\end{pf}}
\let\cl@part\relax
\newcommand{\reals}{\mathbb{R}}
\newcommand{\simplehref}[2]{%
  \pdfstartlink attr{/Border [0 0 0]} user{/Subtype /Link /A << /S /URI /URI (#1) >>}#2\pdfendlink%
}
\providecommand{\doi}[1]{}
\providecommand{\url}[1]{}
\renewcommand{\doi}[1]{\@ifnextchar.{\@gobble}{}}
\renewcommand{\url}[1]{\@ifnextchar.{\@gobble}{}}
\begin{document}

\begin{frontmatter}

\title{Compliant Explicit Reference Governor for Contact Friendly Robotic Manipulators}
\vspace{-10pt}
\thanks[corresponding]{Equal Contribution}
\thanks[footnoteinfo1]{$~$Supported by NSF-CMMI CAREER Grant \#2046212.}
\thanks[footnoteinfo2]{$~~$Supported by NASA Space Technology Graduate Research Opportunity Grant 80NSSC22K1211.}

\author{Yaashia Gautam\thanksref{corresponding} \thanksref{footnoteinfo1}},
\author{Gilberto Briscoe-Martinez\thanksref{corresponding} \thanksref{footnoteinfo2}},
\author{Adhitya Mohan},
\author{Nataliya  Nechyporenko},
\author{Alessandro Roncone},
\author{Marco M. Nicotra}
\vspace{-10pt}

\address{College of Engineering and Applied Sciences, University of Colorado Boulder,
   Boulder, CO 80301 USA \\(e-mail: firstname.lastname@colorado.edu)}

\begin{abstract}
 This paper introduces the Compliant Explicit Reference Governor (CERG), a modular reference management system that enables robots to interact physically with their environment under provable guarantees. The CERG is an intermediate layer that can be placed between a high-level planner and a low-level controller: it enforces operational constraints and enables smooth transitions between free-motion and contact operations.  The CERG ensures safety by limiting the total energy available to the robotic arm at the time of contact. In the absence of contact, however, the CERG does not penalize the system performance.
 Simulation and hardware experiments validate the CERG on increasingly complex systems. \vspace{-10pt}

\end{abstract}
\vspace{-10pt}
\begin{keyword}
 Constrained control, task and motion planning, robotic grasping and manipulation, reference governors, real time motion control
\end{keyword}

\end{frontmatter}

\section{Introduction}\vspace{-10pt}
While traditional motion planning prioritizes collision-free trajectories
to guarantee safety and avoid self-damage, \cite{elbanhawi2014sampling},
modern robotics increasingly requires contact-aware interaction.
Indeed, physical interaction with the environment, rather than its avoidance, paves the way to more human-like behaviors where
contact with the environment is treated as a resource rather than a hindrance,
\cite{PushingPaper}.\smallskip \vspace{-5pt}

To achieve contact-rich capabilities,
robots must transition between free motion and contact safely and coherently.
Moreover, both the robot and the environment impose limits on the feasible motion and shape how physical interaction can occur. We distinguish between hard constraints (joint limits, torque bounds, forbidden surfaces) and soft constraints, where contact is permissible if handled safely.
These soft constraints translate into highly non-convex OR conditions, where the robot must either a) avoid contact entirely, or b) operate under contact-safe restrictions. \vspace{-5pt}

Optimization-based strategies for contact-rich manipulation and trajectory optimization(\cite{adaptive_ci_mpc,Fast_CI_MPC,GlobalContact}) rely on a high-level Model Predictive Control (MPC) layer that relies on
a time-varying local approximation of the contact dynamics. However, they require a full redesign of the control law, which is unappealing for practitioners with a legacy controller whose behaviors are easily interpretable. \vspace{-5pt}

Conversely, hierarchical approaches can add constraint handling to an existing controller without modifying it.
 \cite{Energy_Passive_control} ensures safety by manipulating the compliance of the controller using energy-based considerations. \cite{cbf_contact} filters the control input using  a control barrier function, but the former lacks constraint guarantees and the latter, asymptotic stability guarantees.

The Explicit Reference Governor (ERG) is a
hierarchical safety filter that manipulates the \emph{reference} of an existing controller, rather than modifying it and provides strict safety and stability guarantees \cite{ERG}. \cite{ERG_Robot,RRT+ERG} applied the ERG to robotic arms, but only in the context of contact avoidance.
In this paper, we introduce the \textsl{Compliant Explicit Reference Governor} (CERG): a constraint-handling safety filter tailored to contact-rich tasks. \vspace{-5pt}

Selecting a safety metric for contact operations is a central challenge.
Certain methods (\cite{dickson2025safe, li2021provably}) use interaction force as a safety guideline. Following \cite{Energy_Passive_control}, we define interaction safety using a bound on the maximum total energy (potential+kinetic) of the closed-loop system. Energy-based safety offers several advantages: it is frame-invariant, ties directly to stability guarantees, and is easily interpretable. \vspace{-5pt}

Thus, the CERG enforces safety by allowing the robot to either avoid contact during free motion or engage in contact with bounded mechanical energy. This framework can seamlessly transition between modes without modifying the controller or requiring a specialized solver for the disjunctive constraints. The main contributions are: \vspace{-5pt}

\begin{itemize}
\item A modular reference filtering framework that seamlessly transitions between contact-free and contact-rich motion, while respecting joint angle, velocity, and torque constraints;
\item A safety mechanism for disjunctive (OR-type) constraints, ensuring provable constraint satisfaction without controller modification;
\item An energy-based safety guideline that supports provably safe interaction.
\end{itemize}

The CERG is validated both numerically (on MATLAB and Drake) and experimentally on a Franka Emika Panda.

\vspace{-7pt}
\subsection{Notation}\vspace{-10pt} Given two column vectors $a$ and $b$, the column-wise vector concatenation is denoted as $[a;b]=[a^\top~b^\top]^\top$.\vspace{-5pt}

\section{Problem Statement}\vspace{-5pt}

Consider the dynamic model of a fully actuated robotic arm subject to the Euler Lagrange equation
\begin{equation} \label{cts sys}
    M(q) \ddot q + C(q,\dot q)\dot q + g(q) = u,
\end{equation}
where $q \in \reals^n$ are the degrees of freedom, $u \in\reals^n$ are the actuator inputs, and $M$, $C$ and $g$ are the mass matrix, Coriolis matrix, and gravity vector as defined in \cite{ortega1998euler}, with $\dot M-2C$ being skew symmetric. We use $x = [q^\top;\dot q^\top]$ to denote the full state. The system is subject to \emph{hard} state-and-input constraints
\begin{equation}\label{eq:hard}
    h(q,\dot q,u)\leq0,
\end{equation}
which capture a wide range of restrictions (e.g. joint range limitations, joint velocity limits, actuator saturation, surfaces that the robot end effector cannot come into contact with). The system is also subject to a \emph{soft} constraint
\begin{equation}\label{eq:soft}
    s(q)<0.
\end{equation}
This defines a surface the robot may contact, but only with a limited amount of energy.
Both $h(q,\dot q,u)$ and $s(q)$ are assumed to be $C^1$ continuous.
The robotic arm is in \emph{free motion} if $s(q)<0$ and is in \emph{contact} if $s(q)\geq0$. \smallskip

Treating $s(q)< 0$ as a hard constraint is often overly restrictive, since contact is frequently desirable if safe.
We develop a constrained control strategy that transitions seamlessly between free motion and contact via compliance, satisfying:
\begin{itemize}
    \item Enforce $h(q,\dot q,u)\leq0$ at all times;
    \item Enforce $s(q)<0$ \textbf{-OR-} allow safe contact;
    \item If possible, steer the arm to a reference $r\in\reals ^n$;
    \item Run in real-time with a low computational footprint.
\end{itemize}
Building on prior work on provably safe real-time constrained control of robotic arms \cite{ERG_Robot,RRT+ERG}, this paper addresses contact by leveraging the ability of the ERG framework to handle ``OR'' constraints.
\vspace{-15pt}

\section{Preliminaries}\vspace{-5pt}
The Explicit Reference Governor (ERG) is an optimisation-free reference management scheme that divides the control problem into a \emph{\textbf{prestabilisation}} and a \emph{\textbf{reference management}} subproblem.
Prior ERG implementations (\cite{ERG_Robot,RRT+ERG}) treat contact as undesirable: if an external agent initiates contact, the robot stops until the agent disengages.
This section presents the ERG framework and identifies modifications needed for voluntary contact.
\vspace{-8pt}
\subsection{Prestabilisation}\label{ssec: Prestab}\vspace{-5pt}
The purpose of the prestabilization layer is to design a constraint-agnostic controller that steers the robot to a joint configuration $v\in\reals ^n$. This paper considers a
PD with gravity compensation in joint space, \cite{Arimoto1996}
\begin{equation}\label{PD control}
    u = -K_P(q-v) - K_D\dot q + g(q),
\end{equation}
Given a constant $v\in\reals^n$, the equilibrium point $\bar x_v=[v;0]$ is globally asymptotically stable, as shown via LaSalle's Invariance Principle with
the Lyapunov Function
\begin{equation}\label{lyapunov}
    V(x, v) = \frac{1}{2} \dot q^\top M(q) \dot q + \frac{1}{2} (q-v)^\top K_P (q-v),
\end{equation}
which satisfies $\dot V(x,v) = - \dot q^\top K_D \dot q.$
Note that \eqref{lyapunov} represents the total energy of the prestabilized system: \emph{real} kinetic energy stored in the robot and the \emph{virtual} potential energy stored in the control law, and is monotone decreasing whenever $v$ is constant.

\vspace{-10pt}
\subsection{Explicit Reference Governor}\vspace{-5pt}

Given a target $r\in\reals^n$, the ERG generates a time-dependent signal $v:[0,\infty)\to\reals^n$ so that the prestabilized system \eqref{cts sys}, \eqref{PD control} ensures: (i) constraints \eqref{eq:hard} are satisfied at all times, and (ii) if possible, $\lim_{t\to\infty}q(t)=r$.
This is achieved by manipulating the derivative of the reference according to
\begin{equation} \label{aux ref}
    \dot v = \Delta(x,v) \rho(v,r)
\end{equation}
where $\rho(v,r)$ is the \textbf{Navigation Field}, which solves the kinematic problem \emph{``what steady-state admissible path can the robot take to reach the reference?''}, and $\Delta(x,v)$ is the \textbf{Dynamic Safety Margin}, which solves the dynamic problem \emph{``how fast can the robot travel along its path without violating constraints?''}. The ERG framework, including safety and stability proofs, is detailed in  \cite{ERG}.
\vspace{-5pt}
\subsubsection{Navigation Field (NF):} This paper considers the basic attraction/repulsion NF
\begin{equation}\label{NF_plain}
    \rho(v,r) = \rho_{att}(v,r)+\rho_h(v),
\end{equation}
where
\begin{equation}\label{att field}
    \rho_{att}(r,v) =\frac{r-v}{\mathrm{max} (\| r-v\| , \eta)}
\end{equation}
is a conservative vector field that points from $v$ to $r$,
\begin{equation}\label{eq: rho_h}
    \rho_h(v) = \mathrm{max}  \left( \frac{\zeta + h_{ss}(v)}{\zeta - \delta_h}, 0 \right) \vec{\rho}_h(v),
\end{equation}
is a conservative vector field that points to the interior of the steady-state constraint $h_{ss}(v) = h(v,0,g(v))$, and $\eta>0$, $\zeta>\delta_h>0$ are positive constants.
Computation of $\vec\rho_h(v)$ follows \cite{ERG_Robot}. Local minima can be avoided via RRT augmentation \cite{RRT+ERG}, which is outside this paper's scope.

\subsubsection{Dynamic Safety Margin (DSM)} For simplicity, this paper only considers the trajectory-based DSM
\begin{equation}\label{DSM_plain}
    \Delta_h(x,v) = \kappa_h \inf_{\tau\geq0}(-\hat h(\tau|x,v)),
\end{equation}
where $\kappa_h>0$, and the constraint prediction
\begin{equation}
    \hat h(\tau|x,v) = h(\hat q(\tau),\dot{\hat q}(\tau),\hat u(\tau)),
\end{equation}
is obtained by solving the prestabilized system
\begin{equation}\label{eq: TrajPred}
\begin{cases}
    M(\hat q)\ddot{\hat q}+C(\hat q,\dot {\hat q})\dot {\hat q}+g(\hat q)=\hat u,\\
    \hat u= -K_P (\hat q - v) - K_D\dot {\hat q} + g(\hat q)
\end{cases}
\end{equation}
with initial conditions $[\hat q(0)^\top;\dot{\hat q}(0)^\top]=x$. The forward dynamics are integrated via Symplectic Euler for computational efficiency and details on finite-horizon predictions and discrete dynamics are in \cite{ERG_Robot}.

If $s(q)$ is treated like a hard constraint, the proposed ERG scheme outputs $\dot v=0$ and brings the robot to a full stop. Although this behavior is safe, it prevents movement until contact is resolved. The next section introduces the main contribution: a compliant ERG enabling the robot to operate safely ($\dot v\neq0$) while in contact.

\section{Compliant Explicit Reference Governor}\vspace{-5pt}
 \begin{figure}
 \vspace{10pt}
    \centering
    \includegraphics[width=\columnwidth]{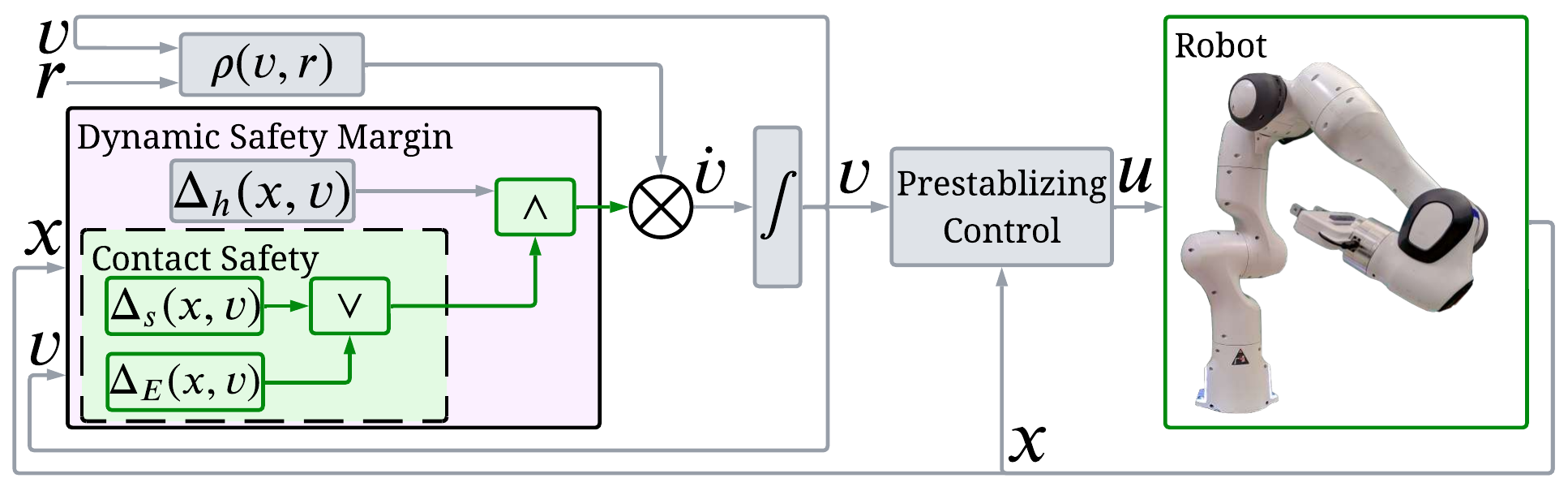}
    \caption{CERG decomposes the Dynamic Safety Margin into Hard and Soft terms. The Soft DSM $\Delta_s(x,v)$ and Energy DSM $\Delta_E(x,v)$ ensure safe contact, while a Soft Navigation Field limits penetration. The applied reference integrates $\rho(v,r)$ scaled by the DSM.\vspace{-5pt}
    \label{CERG_flow}}
\end{figure}
The CERG architecture is illustrated in Fig. \ref{CERG_flow}.
In analogy to \cite{Energy_Passive_control,cbf_contact}, we consider it ``safe'' to come into contact with the robot if
\begin{equation}\label{eq:E_cst}
    V(x,v)\leq E_{\max},
\end{equation}
where $E_{\max}>0$ is an upper bound on the total (kinetic + potential) energy available to the robot. The aim is to manipulate $\dot v(t)$ so that
\begin{itemize}
    \item\emph{Free Motion:} the total energy is unconstrained, as limiting the kinetic energy would unnecessarily slow the robot;
    \item \emph{Contact Operations:} the total energy stored in the system is less than a given safety threshold $E_{\max}>0$.
\end{itemize}
These requirements can be expressed as a logic OR constraint
\begin{equation}\label{eq: OR constraint}
s(q(t))<0~~\vee~~V(x(t),v(t))\leq E_{\max},\quad \forall t\geq0,
\end{equation}
which is equivalent \cite{OR_constraints} to
\begin{equation}\label{eq: compliant_contact}
    \min(s(q),V(x,v)-E_{\max})\leq0.
\end{equation}
To also account for hard constraints $h(q,\dot q,u)\leq0$, we must satisfy $c(x(t),u(t),v(t))\leq0,~\forall t\geq0$, with
\begin{equation} \label{Energy_constraints}
       c(x,u,v) =  [h(q,\dot q,u); ~    \min (s(q), V(x,v) - E_{\max})].
\end{equation}
The compliant ERG can now be obtained by designing a suitable Dynamic Safety Margin and Navigation Field.
\vspace{-5pt}
\subsection{Compliant Dynamic Safety Margin}\vspace{-5pt}
From \cite[Definition 1]{ERG}:

\begin{dfn}\label{DSM_def}
A Lipschitz continuous function $\Delta (x,v)$ is a ``Dynamic Safety Margin'' for the constraint $c(x,u,v)\!\leq\!0$ if
\begin{itemize}
    \item  $\Delta(x,v) > 0 \implies c(\hat x(\tau),\hat u(\tau), v) < 0,~\forall\tau\geq 0$;
    \item $\Delta(x,v) \geq 0 \implies c(\hat x(\tau),\hat u (\tau), v) \leq 0,~\forall\tau\geq 0$;
    \item $\Delta(x,v) = 0 \implies \Delta(\hat x(\tau), v) \geq 0 ,\qquad~\forall\tau\geq 0$;
    \item $\forall \delta>0,~\exists \epsilon>0:~c(\bar x_v,\bar u_v,v)\leq-\delta\Rightarrow\Delta(\bar x_v,v)\geq\epsilon$;
\end{itemize}
where $\hat x(\tau)=[\hat q(\tau);\dot{\hat q}(\tau)]$ and $\hat u(\tau)$ are the solutions to \eqref{eq: TrajPred}.
\end{dfn}
The following Lemma states that the ERG law \eqref{aux ref} ensures forward invariance of the hard constraints.

\begin{lmm}\label{lmm1}
    The DSM $\Delta_h(x,v)$ satisfies the property \\ $\Delta_h(x(t),v) \geq \Delta_h(x(0),v),~\forall t \geq 0$.
\end{lmm}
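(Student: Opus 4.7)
The plan is to exploit a semigroup / time-shift property of the predicted trajectory and then use the elementary fact that the infimum over a sub-interval is no smaller than the infimum over the full interval.

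First I would observe that, because $v$ is treated as constant inside the prediction \eqref{eq: TrajPred}, the predicted dynamics is autonomous in $(\hat q,\dot{\hat q})$. By uniqueness of solutions of the Euler--Lagrange ODE with the PD-plus-gravity-compensation input, the predicted state map $\tau\mapsto \hat x(\tau\mid x,v)$ satisfies the semigroup identity
\begin{equation*}
\hat x(\tau\mid \hat x(t\mid x(0),v),v)=\hat x(t+\tau\mid x(0),v),\qquad \forall t,\tau\ge 0.
\end{equation*}
Moreover, because the closed-loop trajectory of the real system with the reference held at $v$ is itself a solution of \eqref{eq: TrajPred}, it coincides with the prediction launched from $x(0)$, i.e.\ $x(t)=\hat x(t\mid x(0),v)$ for all $t\ge 0$.

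Next I would plug this into the definition of the constraint prediction. Applying $h(\cdot,\cdot,\hat u(\cdot))$ to the semigroup identity yields
\begin{equation*}
\hat h(\tau\mid x(t),v)=\hat h(t+\tau\mid x(0),v),\qquad \forall t,\tau\ge 0.
\end{equation*}
Substituting into \eqref{DSM_plain} and performing the change of variable $\sigma=t+\tau$ gives
\begin{equation*}
\Delta_h(x(t),v)=\kappa_h\inf_{\sigma\ge t}\bigl(-\hat h(\sigma\mid x(0),v)\bigr).
\end{equation*}

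Finally, since $\{\sigma\ge t\}\subseteq\{\sigma\ge 0\}$, the infimum over the smaller set is at least as large as the infimum over the larger one, which immediately yields $\Delta_h(x(t),v)\ge \Delta_h(x(0),v)$.

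There is no real obstacle here; the only subtlety worth flagging is the implicit assumption that the lemma refers to the evolution of $x$ under a reference \emph{held constant at $v$} (so that the actual trajectory matches the one used in the prediction). Once that interpretation is fixed, the proof reduces to the time-shift/semigroup argument above.
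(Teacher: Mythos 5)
Your proposal is correct and follows essentially the same route as the paper: both arguments hinge on the fact that, for constant $v$, the actual trajectory coincides with the prediction, so that $\inf_{\tau\geq0}(-\hat h(\tau|x(t),v))=\inf_{\sigma\geq t}(-\hat h(\sigma|x(0),v))\geq\inf_{\sigma\geq 0}(-\hat h(\sigma|x(0),v))$. Your write-up merely makes the semigroup identity and the constant-$v$ interpretation explicit, which the paper leaves implicit.
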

\begin{proof}
Given a constant $v$, the trajectory predictions match the prestabilized dynamics \eqref{eq: TrajPred} . As a result,
\begin{equation}
    \inf_{\tau\geq0}(-\hat h(\tau|x(t),v))=\inf_{\tau\geq t}(-\hat h(\tau|x(0),v)),
\end{equation}
which satisfies
\begin{equation}
\inf_{\tau\geq t}(-\hat h(\tau|x(0),v))\geq\inf_{\tau\geq 0}(-\hat h(\tau|x(0),v)).
\end{equation}
The statement then follows from \eqref{DSM_plain}.
\end{proof}
We now define the DSM for soft constraints, $ \Delta_s(x,v)$, and prove an important property
\begin{prop}\label{cor1}
Let
\begin{equation}\label{DSM_soft1}
    \Delta_s(x,v)=\kappa_s \inf_{\tau\geq0}(-\hat s(\tau|x,v)),
\end{equation}
where $\kappa_s>0$, $\hat s(\tau|x,v) = s(\hat q(\tau))$, and $\hat q(\tau)$ is the solution to \eqref{eq: TrajPred}. Then,
\begin{equation}
    \Delta_s(x(t),v) \geq \Delta_s(x(0),v),~\forall t\geq 0.
\end{equation}
\end{prop}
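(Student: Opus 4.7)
The plan is to mirror the argument already used for Lemma \ref{lmm1}, since $\Delta_s$ has exactly the same structure as $\Delta_h$: both are $\kappa$-scaled infima of a prediction of a scalar constraint function along the autonomous prestabilized dynamics \eqref{eq: TrajPred}. Nothing in the proof of Lemma \ref{lmm1} used the specific form of $h$ beyond the fact that its prediction is generated by \eqref{eq: TrajPred} with $v$ held constant, so the same chain of reasoning should transfer verbatim with $s$ in place of $h$.

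The key observation I would make explicit is the semigroup (time-invariance) property of the trajectory prediction: for constant $v$, system \eqref{eq: TrajPred} is autonomous, so the prediction starting from the current state $x(t)$ coincides with the tail of the prediction starting from $x(0)$. Concretely,
\begin{equation*}
\hat s(\tau \mid x(t),v) \;=\; \hat s(t+\tau \mid x(0),v), \quad \forall \tau \geq 0.
\end{equation*}
A change of variables then gives
\begin{equation*}
\inf_{\tau \geq 0}\bigl(-\hat s(\tau \mid x(t),v)\bigr) \;=\; \inf_{\tau \geq t}\bigl(-\hat s(\tau \mid x(0),v)\bigr).
\end{equation*}

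To finish, I would invoke the elementary monotonicity of the infimum with respect to set inclusion: since $\{\tau : \tau \geq t\} \subseteq \{\tau : \tau \geq 0\}$,
\begin{equation*}
\inf_{\tau \geq t}\bigl(-\hat s(\tau \mid x(0),v)\bigr) \;\geq\; \inf_{\tau \geq 0}\bigl(-\hat s(\tau \mid x(0),v)\bigr).
\end{equation*}
Multiplying through by $\kappa_s > 0$ and invoking the definition \eqref{DSM_soft1} yields $\Delta_s(x(t),v) \geq \Delta_s(x(0),v)$, as required.

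I do not anticipate a genuine obstacle here; the proposition is essentially a reapplication of Lemma \ref{lmm1}. The only subtlety worth flagging is that $s$ depends only on $q$ (not on $\dot q$ or $u$), but since the prediction $\hat q(\tau)$ is still generated by the same autonomous closed-loop system \eqref{eq: TrajPred}, the semigroup argument goes through without modification. An alternative one-line proof would be to simply cite Lemma \ref{lmm1} with $s$ in place of $h$, but writing out the two-step inequality chain keeps the proposition self-contained and makes its role as a building block for the compliant DSM in the subsequent development more transparent.
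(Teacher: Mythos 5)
Your proposal is correct and follows essentially the same route as the paper: the paper's own proof is exactly the one-line version you mention at the end, namely observing that $s(q)\leq 0$ is a special case of the state-and-input constraint $h(q,\dot q,u)\leq 0$ so that Lemma \ref{lmm1} applies verbatim. Your explicit unfolding of the semigroup and infimum-monotonicity steps is a faithful expansion of that same argument, not a different one.
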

\begin{proof}
    The property holds since the position constraint $s(q)\leq0$ is a special case of the general state-and input constraint $h(q,\dot q,u)\leq0$.
\end{proof}

We now provide a DSM for the compliant constraint.
\begin{thm}\label{thm: DSM}
    Let $\Delta_h$, $\Delta_s$ be defined as in \eqref{DSM_plain}, \eqref{DSM_soft1}, and let
    \begin{equation}
        \Delta_E(x,v)=\kappa_E(E_{\max}-V(x,v)),
    \end{equation}
    with $\kappa_E>0$ and $V(x,v)$ given in \eqref{lyapunov}. Then,
    \begin{equation}
        \Delta(x,v) = \min(\Delta_h,\max(\Delta_s,\Delta_E))
    \end{equation}
    is a Dynamic Safety Margin for \eqref{Energy_constraints}.
\end{thm}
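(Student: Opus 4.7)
The plan is to verify each of the four properties of Definition~\ref{DSM_def} in turn, starting from a single preliminary observation: each of $\Delta_h$, $\Delta_s$, and $\Delta_E$ is non-decreasing along prestabilized trajectories \eqref{eq: TrajPred} with $v$ held constant. For $\Delta_h$ this is Lemma~\ref{lmm1}, for $\Delta_s$ it is Proposition~\ref{cor1}, and for $\Delta_E$ it follows directly from \eqref{eq: Vdot}, which yields $V(\hat x(\tau),v)\leq V(x,v)$ and hence $\Delta_E(\hat x(\tau),v)\geq\Delta_E(x,v)$. Because $\min$ and $\max$ preserve both Lipschitz continuity and pointwise monotonicity along a trajectory, $\Delta$ inherits the required regularity and monotone behaviour.

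For the first two bullets of Definition~\ref{DSM_def} I would argue by case split on which component attains the max/min. If $\Delta(x,v)>0$, then $\Delta_h(x,v)>0$---so $h(\hat q(\tau),\dot{\hat q}(\tau),\hat u(\tau))<0$ for all $\tau$ by \eqref{DSM_plain}---and $\max(\Delta_s(x,v),\Delta_E(x,v))>0$. The latter means either $\Delta_s(x,v)>0$, yielding $s(\hat q(\tau))<0$ for all $\tau$, or $\Delta_E(x,v)>0$, yielding $V(x,v)<E_{\max}$ and, by the preliminary monotonicity, $V(\hat x(\tau),v)<E_{\max}$ for all $\tau$. Either outcome forces $\min(s(\hat q(\tau)),V(\hat x(\tau),v)-E_{\max})<0$, so $c(\hat x(\tau),\hat u(\tau),v)<0$ componentwise. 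Replacing strict by non-strict inequalities throughout gives the second bullet. The third bullet is then immediate: $\Delta(x,v)=0$ forces both $\Delta_h(x,v)\geq0$ and $\max(\Delta_s(x,v),\Delta_E(x,v))\geq0$, and the preliminary monotonicity preserves both inequalities along the predicted trajectory.

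For the fourth bullet I would evaluate $\Delta$ at the equilibrium $\bar x_v=[v;0]$: since $V(\bar x_v,v)=0$, we get $\Delta_E(\bar x_v,v)=\kappa_E E_{\max}$ unconditionally, so $\max(\Delta_s,\Delta_E)\geq\kappa_E E_{\max}$ and $\Delta(\bar x_v,v)\geq\min(-\kappa_h h_{ss}(v),\kappa_E E_{\max})$. The hypothesis $c(\bar x_v,\bar u_v,v)\leq-\delta$ forces $h_{ss}(v)\leq-\delta$, so taking $\epsilon=\min(\kappa_h\delta,\kappa_E E_{\max})$ closes the argument. The main subtle point, and where I would spend the most care, is the interplay between the OR constraint \eqref{eq: compliant_contact} and the $\max(\Delta_s,\Delta_E)$ construction: neither $\Delta_s$ nor $\Delta_E$ alone is a DSM for $\min(s,V-E_{\max})$, but their maximum is, precisely because each piece's monotonicity along prestabilized trajectories propagates through the $\max$ operation, which is what makes the third bullet of Definition~\ref{DSM_def} hold.
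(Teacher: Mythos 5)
Your proposal is correct and follows essentially the same route as the paper: it relies on the same three monotonicity facts (Lemma~\ref{lmm1}, Proposition~\ref{cor1}, and $\dot V\leq0$ giving $V(\hat x(\tau),v)\leq V(x,v)$), handles the OR constraint through the $\max$ and the AND through the $\min$, and uses $V(\bar x_v,v)=0$ to get $\Delta_E(\bar x_v,v)=\kappa_E E_{\max}>0$ at equilibrium. The only difference is organizational—you verify the four properties of Definition~\ref{DSM_def} directly on the composite $\Delta$ rather than first certifying each of $\Delta_h,\Delta_s,\Delta_E$ as a DSM for its own constraint—and your explicit choice $\epsilon=\min(\kappa_h\delta,\kappa_E E_{\max})$ for the fourth bullet is, if anything, slightly cleaner than the paper's.
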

\begin{proof}
    By construction, $\Delta_h\!>\!0$ implies $h(\hat x(\tau),\hat u(\tau))\!<\!0,$ $\forall \tau\geq0$ and $\Delta_s\!>\!0$ implies $s(\hat q(\tau))\!<\!0,$ $\forall \tau\geq0$. The same holds for non-strict inequalities (i.e. ``$\geq$'' instead of ``$>$''). \smallskip

    Lemma \ref{lmm1} and Proposition \ref{cor1} ensure $\Delta_h(\hat x(\tau),v)\geq\Delta_h(x,v)$ and $\Delta_s(\hat x(\tau),v)\geq\Delta_s(x,v)$, for all $\tau\geq0$.\smallskip

    Finally, $h(v,0,g(v))\leq-\delta\implies \Delta_h(x,v)\geq\delta$ and $s(v)\leq-\delta\implies \Delta_s(x,v)\geq\delta$ hold by construction since the equilibrium point $(x,v)=(\bar x_v,v)$ always satisfies $\hat x(\tau|x,v)=\bar x_v,~\forall \tau\geq0$.\medskip

    These properties are all a direct consequence of the fact that $\Delta_h$ and $\Delta_s$ are trajectory-based DSMs for their respective constraints. As for $\Delta_E$, it follows from $\dot V(x,v) = - \dot q^\top K_D \dot q$ that
    \begin{equation}
        V(\hat x(\tau),v)\leq V(x,v),\qquad \forall \tau\geq0.
    \end{equation}
    Thus, $E_{\max}-V(x,v)\geq0$ implies $E_{\max}-V(\hat x(\tau),v)\geq0$, $\forall \tau\geq0$. This is sufficient to show that $\Delta_E(x,v)>0$ implies $V(\hat x(\tau),v)-E_{\max}<0$ (same with non-strict signs) and that $\Delta_E(x,v)\geq0\implies\Delta_E(\hat x(\tau),v)\geq0,~\forall \tau\geq0$. As for the final property of DSMs, we note that, regardless of $v$, the Lyapunov function satisfies $V(\bar x_v,v)=0$. Therefore, the system always satisfies $\Delta_E(\bar x_v,v)\geq\kappa_E E_{\max}>0$. This proves that $\Delta_h, \Delta_s$ and $\Delta_E$ satisfy Definition \ref{DSM_def} for their respective constraints.

    To conclude the proof, we now look at how the individual DSMs are composed. First, consider
    \begin{equation}
        \Delta^\prime(x,v)=\max(\Delta_s(x,v),\Delta_E(x,v)).
    \end{equation}
    Since $\Delta^\prime>0$ implies either $\Delta_s>0$ or $\Delta_E>0$, it follows  that $\Delta^\prime(x,v)>0$ is sufficient to satisfy the OR constraint \eqref{eq: OR constraint}. Moreover, it follows from the time monotonicity of $\Delta_s,\Delta_E$ that $\Delta^\prime(\hat x(\tau),v)\geq\Delta^\prime(x,v),~\forall \tau\geq0$. Finally, we note that $\min(s(\hat q_v),V(x,v)-E_{\max})\leq-\delta$ implies $\Delta^\prime(x,v)\geq\max(\delta,\kappa_EE_{\max})$.
    Similar reasoning applies to the overall function $\Delta(x,v)=\min(\Delta_h(x,v),\Delta^\prime(x,v))$, where $\min$ operator enforces an AND rather than an OR constraint.
\end{proof}

Theorem \ref{thm: DSM} yields a Dynamic Safety Margin that mixes hard and soft constraints. The former are enforced at all times, the latter features two alternatives, only one of which needs to be enforced at any given time.

\subsection{Soft Navigation Field}\vspace{-5pt}
Unlike hard constraints, the soft navigation field  should allow $v(t)$ to penetrate the constraint boundary. Having $v(t)$ inside an object is what enables the compliant controller to enter contact with that object. The following ``soft'' repulsion field is proposed
\begin{equation}\label{soft_rep}
    \rho_s(v) = \mathrm{max}   \left( \frac{s(v)}{\delta_s}, 0  \right ) \vec\rho_s(v),
\end{equation}
where the unitary vector $\vec\rho_s(v)$ points to the interior of the constraint and can be computed as detailed in \cite{ERG_Robot}. This particular repulsion field satisfies
\begin{itemize}
    \item $s(v)<0\implies\rho_s(v)=0$: the repulsion field is inactive whenever $v$ is outside the constraint;
    \item $s(v)=\delta_s\implies\|\rho_s(v)\|=1$: preventing $v$ from penetrating beyond $\delta_s$ since the attraction field satisfies the bound $\|\rho_{att}(v,r)\|\leq1,~\forall v\in\reals^n$.
\end{itemize}
The resulting attraction/repulsion field is
\begin{equation}
    \rho(v,r)=\rho_{att}(v,r)+\rho_h(v)+\rho_s(v),
\end{equation}
again augmentable with RRT to avoid local minima.

\section{Numerical Examples}\label{sec: NumEx} \vspace{-5pt}
This section validates CERG framework for increasingly complex systems and contact models, starting with a two-link arm, progressing to a 7-DoF Franka Emika simulation in Drake, and concluding with hardware experiments, on the Franka Emika arm.
We benchmark CERG against ERG and a standard Cartesian impedance controller with a null space control on hardware, demonstrating its significant practical advantages. Videos are available at
\textcolor{blue}{\underline{\texttt{http://yaashia-g.github.io/publications/CERG/}}}.

\subsection{Two Link Planar Arm}\vspace{-5pt}
For the purpose of this simulation, the contact forces are modeled as a unidirectional spring damper system.
\begin{equation}\label{Ext Force}
    F_{ext} = -\max(0, Ks(q))\nabla s(q)-\max(0, Bs(q))\dot q
\end{equation}
where $K$ and $B$ are the spring and damping coefficients of the object.
With $\vec\rho_s(v)=[0~-1]^\top$ in \eqref{soft_rep}, the navigation field steers $v$ towards the solution to
\begin{align}
    \min~&\|v-r\|^2\label{eq:v_opt}\\
    \mathrm{s.t.}~~ & s(v)\leq-\delta_s.\nonumber
\end{align}
This property follows from convexity of the constraints and the attraction/repulsion NF being a regularized projected gradient flow, \cite{ERG}. For this simple example,
since $K_P$ represents a virtual spring between $x$ and $v$, a desired steady-state force $F_{ss}>0$ maps directly to  maximum penetration depth $\delta_s$.
\begin{equation}\label{eq:ss-map}
    \delta_s=\frac{F_{ss}}{K_P}.
\end{equation}
Noting that the corresponding potential energy at steady state is $E_{ss}=\frac12K_P\delta_s^2$, we must satisfy the energy upperbound $E_{ss}<E_{\max}$ if we wish to attain the target steady-state force.

We consider a two-link planar arm (RR arm) with dynamics satisfying \eqref{cts sys}  as in \cite{ModernRobotics}.
The parameters $l_1, l_2, m_1, m_2 $ are 1m, 0.5m, 2kg and 1.5kg respectively.
The end effector position is $p=f(q)$, where $f(q)$ are the forward kinematics. The initial condition is $q_0 = [\pi/2,\, \pi/5]^\top$ and the reference $r = [\pi/4,\, -\pi/3]^\top$, corresponding to end-effector position $[1.19,\, 0.57]^\top$.
The system has soft constraints $p_x \leq 1$, which the reference violates.
Fig. \ref{RR_joint} shows the trajectory of the end effector reference $f(v)$ and the steady-state configuration reached by the RR arm. Given
\begin{equation}
    \vec q_s(v)=\frac{J^\dagger(v)[1~0]^\top}{\|J^\dagger(v)[1~0]^\top\|},
\end{equation}
for \eqref{soft_rep}, where $J(v)$ is the Jacobian of the forward kinematics and $J^\dagger$ is its pseudoinverse, following \cite{ERG_Robot}, $v$ converges to a local minimum of
\begin{align}
    \min~&\|f(v)-f(r)\|^2\\
    \mathrm{s.t.}~~ & s(v)\leq-\delta_s.
\end{align}
Despite this, note that $p$ \emph{does not} converge to the Cartesian projection of $f(v)$ onto the constraints, as CERG is implemented in joint space, instead of end effector space.

A possible way to address this issue is to reformulate the CERG in end effector space using the control law
\begin{equation}
    u = -k_pJ(q)^\top(f(q)-f(v)) - k_d\dot q + g(q).
\end{equation}
The corresponding energy function
\begin{figure}
    \centering
    \includegraphics[trim=.7cm 0.1cm 0.65cm 0cm, clip=true,scale=0.57]{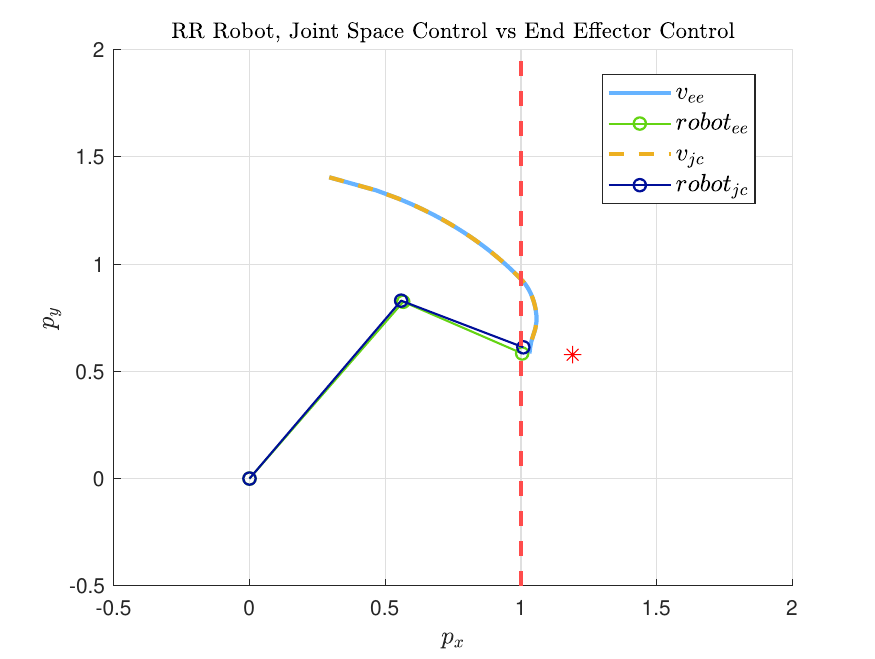}
    \caption{ CERG applied to the RR arm with joint-space and end-effector control. The final configurations minimize joint-space and Cartesian error, respectively, but the auxiliary reference paths are identical. }\label{RR_joint}
\end{figure}
\begin{figure}
    \centering
    \includegraphics[trim=0.3cm 0.1cm 0.7cm 1.1cm, clip=true,scale=0.6, width=1.0\linewidth, height=0.59\linewidth]{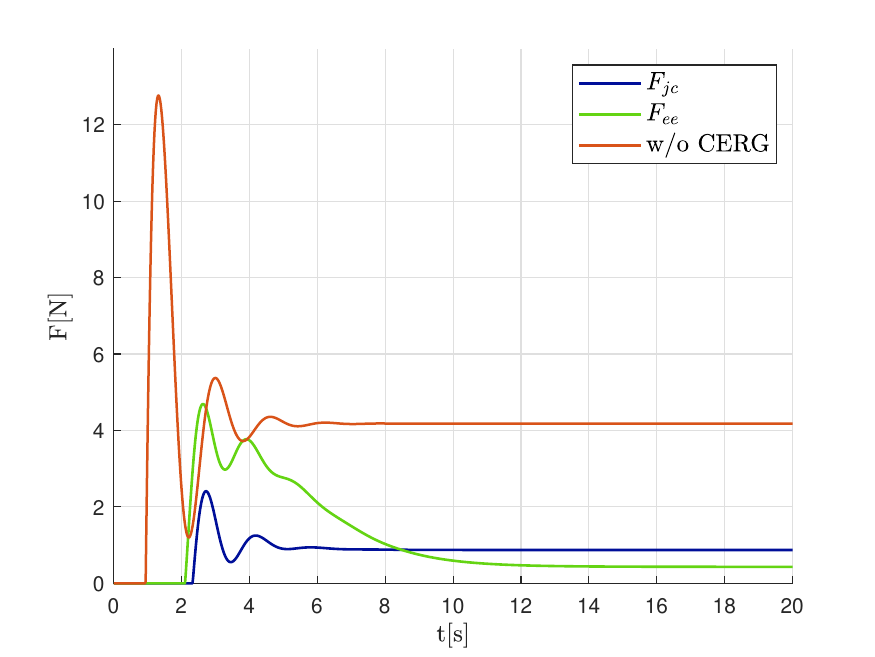}
    \caption{Forces exchanged between the RR arm and the wall for CERG with a joint space controller, CERG with end effector controller, and without CERG. \label{RR arm force}}
\end{figure}
\[
        V(x, v) = \frac{1}{2} \dot q^\top M(q) \dot q + \frac{1}{2} (f(q)-f(v))^\top K_P (f(q)-f(v)),
\]
is monotone time decreasing, but it \textbf{a)} may stagnate whenever $f(q)-f(v)\in\mathrm{null}(J(q)^\top)$, and \textbf{b)} proves the stability of the equilibrium subspace $f(q)=f(v)$ as opposed to the equilibrium point $q=v$. Interestingly, the theory presented applies  to end effector control without modifications.
Given $k_p=16$ and $k_d=10$, the comparison in Fig. \ref{RR_joint} shows that the CERG implemented in end effector space successfully reaches the configuration that minimizes the Cartesian distance between $f(q)$ and $f(v)$ and the steady-state mapping \eqref{eq:ss-map} is preserved. Meanwhile, the $\delta_s$ -$F_{ss}$  relationship becomes less intuitive in joint space.

Despite the apparent advantages of end effector control (which the CERG can perform, if desired), the joint-space formulation is preferable in a hierarchical architecture, since task-space to joint-space conversion is typically handled by the high-level planner.
Using two different controllers confirms CERG acts as a modular add-on for any low-level controller.
Fig. \ref{RR arm force} shows that CERG reduces interaction forces in both formulations as compared to operating without it.
The $E_{\max}$ bounds were respected in both cases (energy plots omitted for brevity), confirming that CERG also limits interaction forces.
\vspace{-5pt}
\subsection{Franka Emika Panda Simulation using DRAKE}\vspace{-5pt}
For a more realistic example, we simulate the 7-DoF Franka Emika using Drake's Compliant Contact Model, which accounts for dissipation, stiffness, and static/dynamic friction, \cite{drake}.
The robot is subject to hard constraints on joint torque, joint velocities, and joint limits, that are necessary for the functioning of the robot, as specified in \cite{FCI}.

\begin{figure}
    \centering
    \includegraphics[width=0.8\columnwidth]{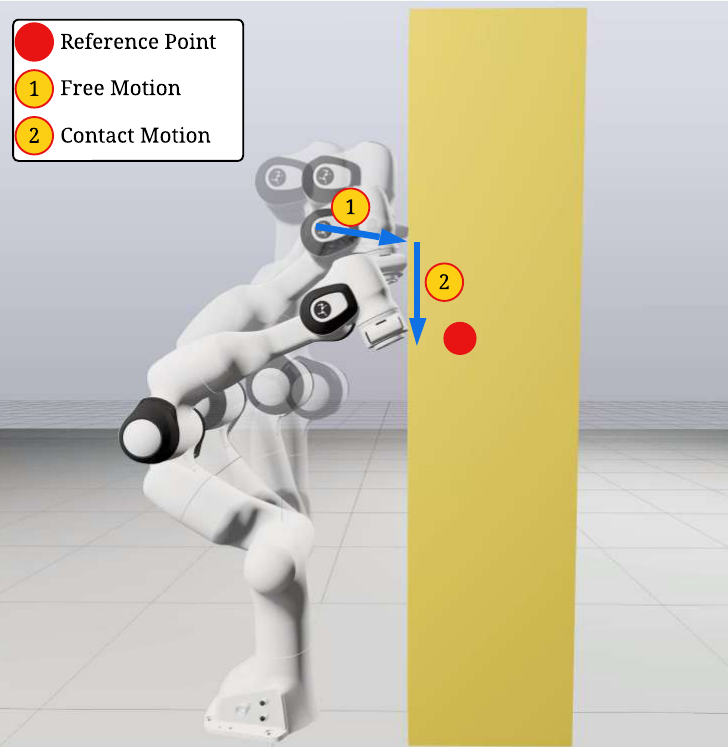}
    \caption{ Task description for CERG evaluated in DRAKE simulation with a Franka Emika Panda. The robot starts in free motion and then slides along the wall.\label{FR} }
\end{figure}

The reference, again, was chosen to force the arm to come into contact with the obstacle. In this case, the end effector constraint is $p_x\leq0.2$ and the end effector reference is $f(r)=[0.3, 0, 0.59]^\top$. Fig. \ref{FR} shows the simulation environment and the final position of the robot arm. Fig. \ref{cF} shows the energy of interaction for CERG and ERG. As expected, the CERG successfully limits the total energy of the closed-loop system.
\begin{figure}
    \centering
    \includegraphics[trim=0cm 0.2cm 0cm 0.1cm, clip=true,width=\columnwidth]{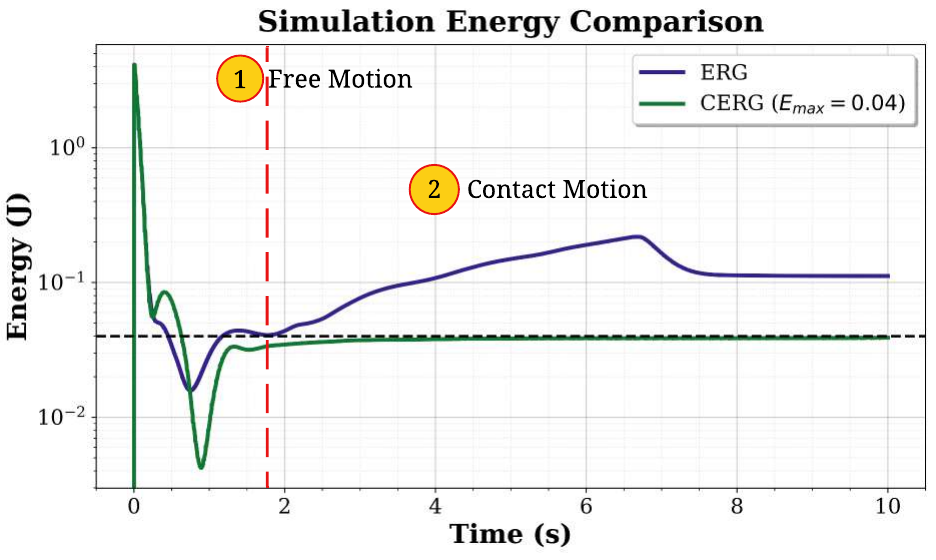}
    \caption{Energy comparison for compliant point contact model simulation in DRAKE. CERG adheres to $E_{\max}$
during contact while matching the ERG's free-motion energy profile.\label{cF} }
\end{figure}

 \begin{figure}
    \centering
    \includegraphics[trim=0cm 0.2cm 0cm 0cm, clip=true, width=\columnwidth]{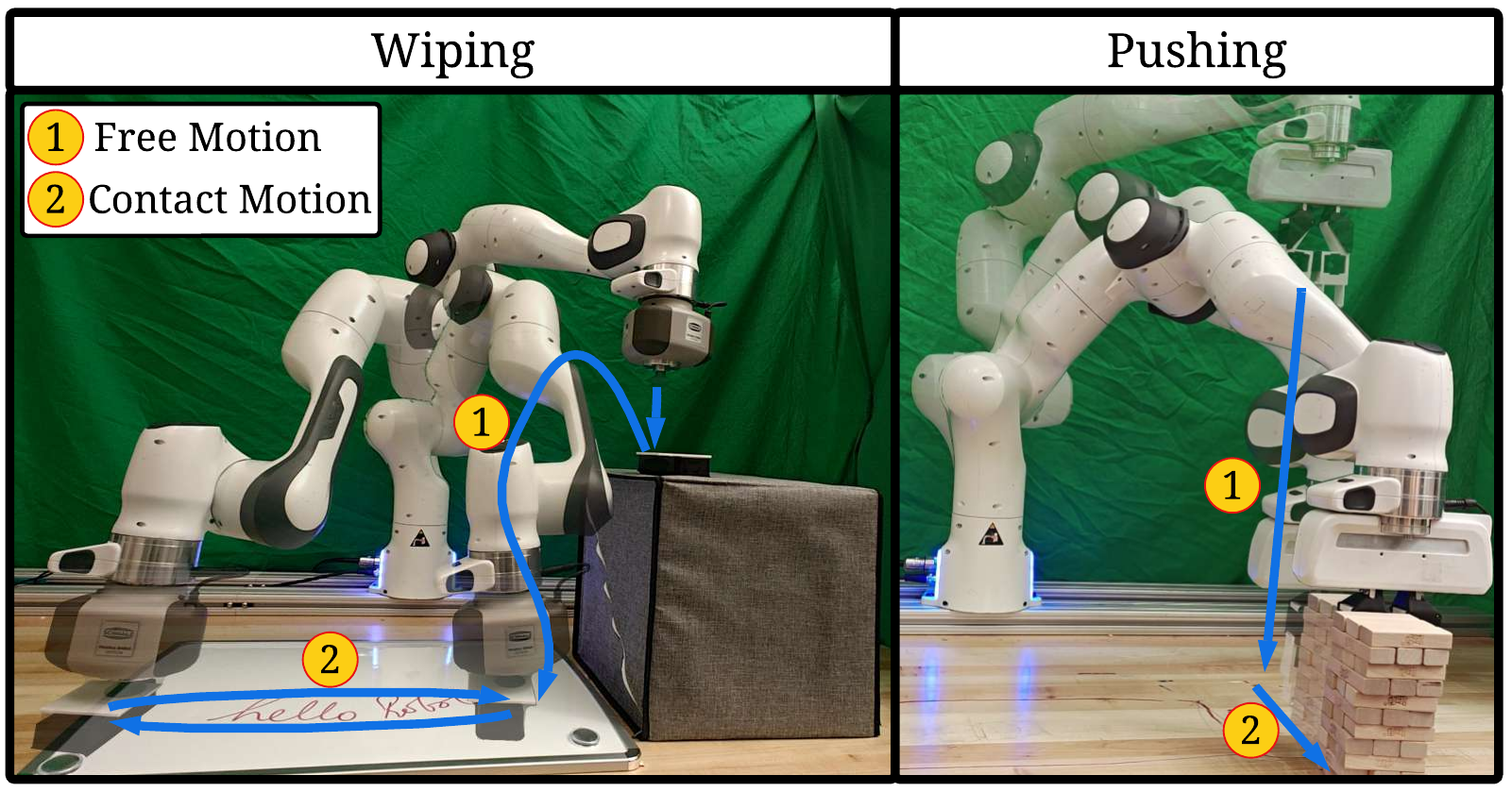}
    \caption{Experiment Setup with the Franka Emika. CERG structure is validated on 2 tasks, Wiping a Whiteboard and Pushing a Jenga tower. \label{fig:real_world}}
\end{figure}
\vspace{-10pt}

\subsection{Franka Emika Panda Real World Experiments}\vspace{-10pt}
\begin{figure}[t]
  \centering
  \includegraphics[trim=0cm 0.2cm 0cm 0.0cm, clip=true,width=\columnwidth]{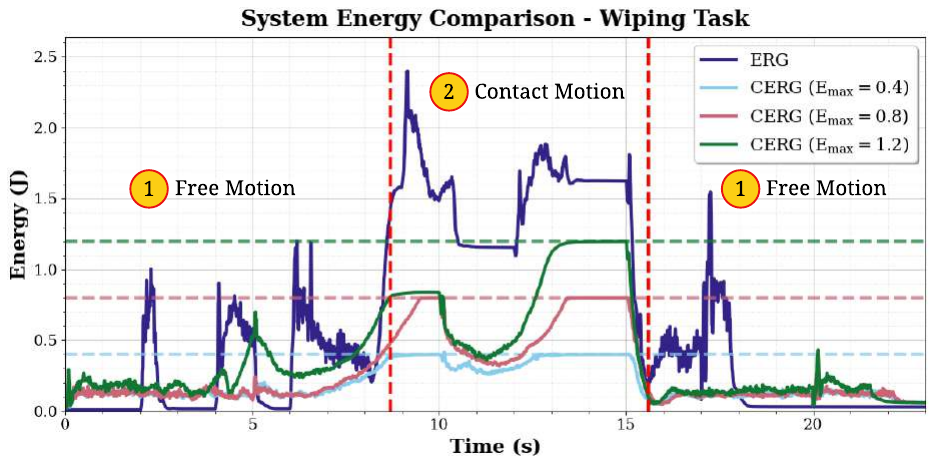}
  \caption{System energy comparison for the wiping task. The ERG violates energy limits during contact, while CERG enforces every imposed $E_{\max}$ precisely.}
  \label{fig:wiping_energy}
\end{figure}
\begin{figure}[t]
  \centering
  \includegraphics[trim=0cm 0.15cm 0cm 0cm, clip=true,width=\columnwidth]{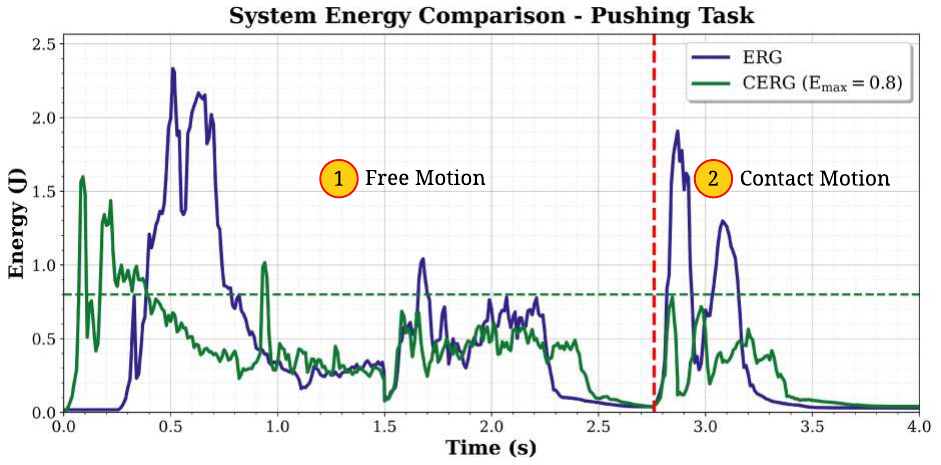}
  \caption{System energy comparison for pushing task. ERG topples the tower as it cannot enforce energy limits; CERG adheres to $E_{\max}$ and completes the task. \label{pushing_energy}}
  \label{fig:pushing_energy}
\end{figure}
\begin{table}[t]
  \centering
  \caption{Real-world experiments: Wiping and Pushing.}
  \label{tab:franka_realworld}
  \resizebox{\columnwidth}{!}{%
  \begin{tabular}{clccc}
    \toprule
    \textbf{Task} & \textbf{Controller}
      & \textbf{Task}
      & \textbf{Limited}
      & \textbf{Constr.}\\
      & & \textbf{Done} & $E_{\max}$ & \textbf{Enf.}\\
    \midrule
    \multirow{3}{*}{\shortstack[c]{Wiping \\ \simplehref{https://yaashia-g.github.io/publications/CERG/\#:~:text=and\%20impedance\%20control.-,1.\%20Wiping\%20Task,-In\%20this\%20task}{(\textcolor{blue}{Videos})}}}
      & Impedance  & \cmark & \xmark & \xmark \\
      & ERG        & \cmark & \xmark & \cmark \\
      & CERG       & \cmark & \cmark & \cmark \\
    \midrule
    \multirow{3}{*}{\shortstack[c]{Pushing \\ \simplehref{https://yaashia-g.github.io/publications/CERG/\#:~:text=2.\%20Jenga\%20Tower\%20Pushing\%20and\%20Perturbation}{(\textcolor{blue}{Videos})}}}
      & Impedance  & \xmark & \xmark & \xmark \\
      & ERG        & \xmark & \xmark & \cmark \\
      & CERG       & \cmark & \cmark & \cmark \\
    \bottomrule
  \end{tabular}%
  }
  \vspace{-5pt}
\end{table}

Having validated CERG in simulation,
we validate CERG on two real-world contact-rich tasks using the Franka Emika Panda: whiteboard wiping and Jenga pushing. The robot was subject to hard constraints, as in the previous example. The performance was measured against ERG and a Cartesian Impedance Controller with null space control, \cite{AlbuSchaeffer2003}. CERG and ERG share the same nominal joint-space controller; the impedance controller was tuned per task. The Jenga tower face towards the robot and the whiteboard were encoded as soft constraints.
Table~\ref{tab:franka_realworld} summarizes results: CERG prevents constraint violations in both tasks, while ERG and impedance control each fail in at least one.
\vspace{-8pt}
\subsubsection{Wiping} (Fig. \ref{fig:real_world}, left) The wiping task requires the robot to grasp an eraser, then sweep it back and forth across a whiteboard. Since the task is not energy-sensitive, \textcolor{blue}{\simplehref{https://yaashia-g.github.io/assets/video/Cerg\%20wiping.mp4}{CERG}}, \textcolor{blue}{\simplehref{https://yaashia-g.github.io/assets/video/Erg\%20wiping.mp4}{ERG}}, and \textcolor{blue}{\simplehref{https://yaashia-g.github.io/assets/video/wiping_impedance_ideal.webm}{Impedance control}} all complete it.
During contact, CERG enforces three different $E_{\max}$ values (Fig.\ref{fig:wiping_energy}) while the ERG does not.
The Cartesian Impedance controller, on the other hand, does not deal with hard constraints. When put in an initial configuration far away from the eraser, the controller \textcolor{blue}{\simplehref{https://yaashia-g.github.io/assets/video/impedance_jenga_bad.webm}{goes into a joint velocity violation and the motion stops}}. However, \textcolor{blue}{\simplehref{https://yaashia-g.github.io/assets/video/cerg_wiping_good.webm}{CERG is able to complete the task without any hard limit violations}}.  \vspace{-5pt}
\subsubsection{Pushing} (Fig. \ref{fig:real_world}, right) The pushing task requires the robot to push a stacked Jenga tower without toppling it. This highly energy-sensitive task demonstrates bounded energy transfer while still achieving the objective. By limiting contact energy, CERG \textcolor{blue}{\simplehref{https://yaashia-g.github.io/assets/video/Cerg\%20jenga.mp4}{pushes the tower without toppling it}}, while exceeding $E_{\max}$ in free motion with no performance penalty. \textcolor{blue}{\simplehref{https://yaashia-g.github.io/assets/video/Erg\%20jenga.mp4}{ERG fails}} due to excessive contact energy (Fig.~\ref{pushing_energy}). The impedance controller, \textcolor{blue}{\simplehref{https://yaashia-g.github.io/assets/video/Impedance\%20jenga\%20ideal.webm}{tuned for gentle pushing}}, either \textcolor{blue}{\simplehref{https://yaashia-g.github.io/assets/video/bad_impedance_jenga.webm}{violates joint velocity limits from distant starts}} or, when tuned slower, \textcolor{blue}{\simplehref{https://yaashia-g.github.io/assets/video/impedance_jenga.webm}{fails to transition smoothly and topples the tower}}.

The experiments confirm that CERG enables seamless phase transitions, disjunctive constraint satisfaction, and energy-bounded safe interaction without controller redesign.
\vspace{-8pt}

\section{Conclusion and Future Works}\vspace{-10pt}
This paper introduced the Compliant ERG, a safety filter that modifies the reference of a robot arm, ensuring satisfaction of constraints. The key innovation of the Compliant ERG is that it enables the robot to seamlessly transition between two operating modes: \emph{free motion}, where the energy of the robot is unrestricted but contact is not allowed, and \emph{contact}, where contact is allowed but the robot has a limited amount of available energy. Numerical simulations, and real world experiments provide insight on the behavior of the CERG and validate the feasibility of the approach. Future work will focus on interfacing the CERG with task-oriented planners. \vspace{-5pt}

\footnotesize

\setlength{\bibsep}{0pt plus 0.3ex}
\bibliography{ifacconf}

\end{document}